\def\eqref#1{equation~\ref{#1}}
\def\1{\bm{1}}
\DeclareMathAlphabet{\mathsfit}{\encodingdefault}{\sfdefault}{m}{sl}
\SetMathAlphabet{\mathsfit}{bold}{\encodingdefault}{\sfdefault}{bx}{n}
\newcommand{\cmark}{\ding{51}}
\newcommand{\xmark}{\ding{55}}
\newtheorem{lemma}[]{Lemma}[section]
\newtheorem{proposition}[]{Proposition}[section]
\newtheorem{definition}[]{Definition}[section]
\title{Empowering Networks With Scale and Rotation Equivariance Using A Similarity Convolution}
\author{Zikai Sun and Thierry Blu \\ 
Department of Electronic Engineering, The Chinese University of Hong Kong\\ \texttt{zksun@link.cuhk.edu.hk, thierry.blu@m4x.org} \\
}
\begin{document}

\maketitle

\begin{abstract}
The translational equivariant nature of Convolutional Neural Networks (CNNs) is a reason for its great success in computer vision.
However, networks do not enjoy more general equivariance properties such as rotation or scaling, ultimately limiting their generalization performance.
To address this limitation, we devise a method that endows CNNs with simultaneous equivariance with respect to translation, rotation, and scaling. 
Our approach defines a convolution-like operation and ensures equivariance based on our proposed scalable Fourier-Argand representation.
The method maintains similar efficiency as a traditional network
and hardly introduces any additional learnable parameters, 
since it does not face the computational issue that often occurs in group-convolution operators.
We validate the efficacy of our approach in the image classification task, 
demonstrating its robustness and the generalization ability to both scaled and rotated inputs.
\end{abstract}

\section{Introduction}
The remarkable success of network architectures can be largely attributed to the availability of large datasets and a large number of parameters,
enabling them to ``remember" vast amounts of information. 
On the contrary, humans can learn new concepts with very little data and are able to generalize this knowledge.
This disparity is due, in part, to the current limitations in modeling geometric deformations in network architectures. 
Networks are inclined to ``remember" data through filter parameters rather than ``learning" a full generalization ability.
For instance, in classification tasks, networks trained on datasets with specific object sizes often fail when tested on objects with different sizes that were not present in the training set.
The ability to factor out transformations, such as rotation or scaling, in the learning process remains to be addressed. It is indeed quite frequent to deal with images in which objects have a different orientation and scale than in the training set, for instance, as a result of distance and orientation changes of the camera.

To mitigate this issue, it is common practice to perform data augmentation \citep{krizhevsky2012imagenet} prior to training.
However, this leads to a substantially larger dataset and makes training more complicated. Moreover, this strategy tends to learn a group of duplicates of almost the same filters, which often requires more learnable parameters to achieve competitive performance. 
A visualization of the weights of the first layer \citep{zeiler2014visualizing} highlights that many filters are similar but rotated and scaled versions of a common prototype, which results in significantly more redundancy.

The concept of equivariance emerged as a potential solution to this issue.
Simply put, equivariance requires that if a given input undergoes a specific geometric transformation, the resulting output feature from the network (with weights randomly initialized) should exhibit a similarly predictable geometric transformation. Should a network satisfy equivariance to scalings and rotations, training it with only one size and orientation would naturally generalize its performance to all sizes and orientations.

To achieve this property, group convolution methods have been widely used in this field. 
An oversimplified interpretation of a typical group convolution method is as follows:
Features are convolved with dilated filters of the same template to obtain multi-channel features,
where the distortion of input features corresponds to the cycle shift between channels. 
For example, equivariant CNNs on truncated directions \citep{cohen2016group, zhou2017oriented} leverage several directional filters to obtain equivariance within a discrete group.
Further works extended rotation equivariance to a continuous group, using techniques such as steerable filters \citep{weiler2018learning, cohen2019general}, B-spline interpolation \citep{bekkers2019b} or Lie Group Theory \citep{bekkers2019b, finzi2020generalizing}.
A similar path to scaling equivariance has been explored, 
although scaling is no longer intrinsically periodic.
The deep scale space \citep{worrall2019deep} defined a semi-symmetry group to approximately achieve scale equivariance, while \cite{sosnovik2019scale} applied steerable CNNs to scaling.
However, integrating equivariance to both rotations and scalings simultaneously leads to a larger group (e.g., a rotation group with $M$ points and a scaling group with $N$ points results in $M \times N$ points for the joint rotation and scaling group), making the task more challenging. Additionally, certain "weight-sharing" techniques based on group convolution can be computationally and memory-intensive.

\begin{wrapfigure}[19]{R}{0.45\textwidth}
\centering
\vspace{-20pt}
\includegraphics[width=\linewidth]{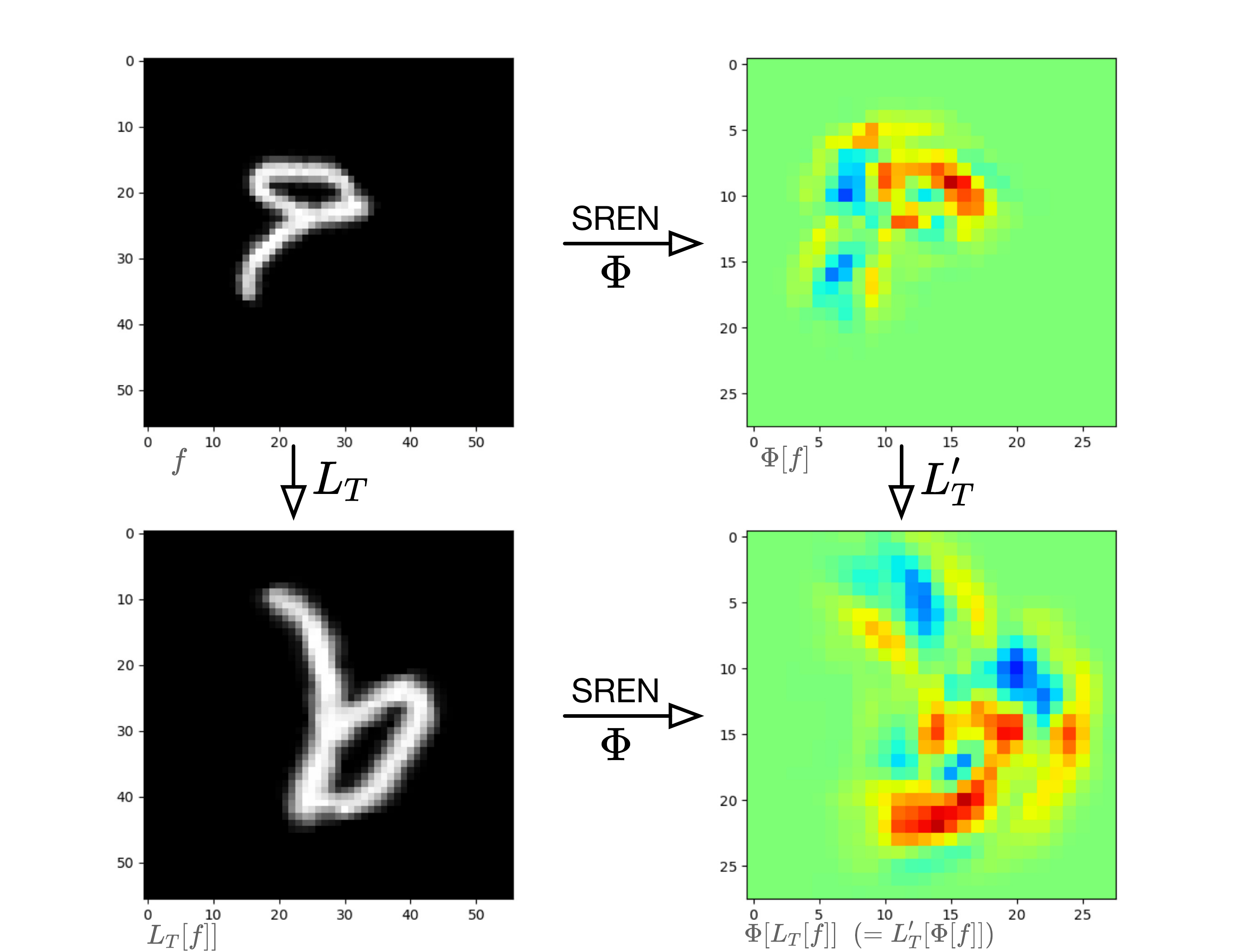}
\caption{The visualization of the Sim(2) equivariance property: Our SREN method inherently retains the structure information of the input, enabling it to handle distorted objects (rotation, scaling, and translation) without additional training. 
}
\label{fig:short-a}
\end{wrapfigure}

Despite the difficulties, empowering the model with equivariance of rotation and scaling together can be highly advantageous.
For instance, in object detection, the distance changes between the camera and the object, or the random rotations of the object, can significantly impact the accuracy of the method.

The aim of this paper is to propose a CNN architecture that achieves continuous equivariance with respect to both rotation and scaling, thereby filling a void in the field of equivariant methods. 
To accomplish this, we provide a theoretical framework and analysis that guarantees the network preserves this inherent property.
Based on this, we propose a new architecture, the Scale and Rotation Equivariant Network (SREN), that avoids the abovementioned limitations and does not sizably increase computational complexity.
Specifically, we first designed a scalable Fourier-Argand representation. The expression of the basis makes it possible to operate the angle and scale in one shot. Further, we propose a new convolution-like operator that is functionally similar to traditional convolution but with slight differences.
We show that this new method has similar computational complexity to convolution and can easily replace the typical network structures.
Our approach can model both rotation and scaling, enabling it to consistently achieve accurate results when tested with datasets that have undergone different transformations, such as rotation and scaling.

The main contributions in this paper are summarized as follows:
\vspace{-5pt}
\begin{itemize}[itemsep=0.5pt, topsep=1pt, leftmargin=*]
    \item We introduce the scalable Fourier-Argand representation, which enables achieving equivariance on rotation and scaling.
    \item We propose the SimConv operator, along with the scalable Fourier-Argand filter, forming the Scale and Rotation Equivariant Network (SREN) architecture. 
    \item SREN is an equivariant network for rotation and scaling that is distinct from group-convolutional neural networks, offering a new possible path to solving this problem for the community.
\end{itemize}

\section{Related Work}
\vspace{-4pt}
\paragraph{Group convolution}
To accomplish equivariance property, a possible direction is the application of group theory to achieve equivariance.
\citet{cohen2016group} introduced group convolution, which enforces equivariance to a small and discrete group of transformations, i.e., rotations by multiples of 90 degrees.
Subsequent efforts have aimed to generalize equivariance \citep{zhou2017oriented} and also focus on continuous groups coupled with the idea of steerable filters \citep{cohen2016steerable}.
To achieve this purpose, Lie group theory has also been utilized, as presented in works such as LieConv \citep{finzi2020generalizing}, albeit only for compact groups.
Unfortunately, the scaling group is non-compact, so methods typically treat it as a semi-group and use approximations to achieve truncated scaling equivariance.
TridentNet \citep{li2019scale} gets scale invariance by sharing weights among kernels with different dilation rates. 
Another approach is to apply scale-space theory \citep{lindeberg2013scale}, which considers the moving band-limits caused by rescaling, as proposed by \citet{worrall2019deep} to attain semi-group equivariance.
\citet{bekkers2019b} generate the G-CNNs for arbitrary Lie groups by B-spline basis functions, and can achieve scale or rotation equivariance when using different settings.
SiamSE \citep{sosnovik2021scale} equip the Siamese network with additional built-in scale equivariance.
Although \citet{sosnovik2021transform, sosnovik2021disco}  replacing weight sharing scheme with dilated filters can parallel the process, and thus O(1) in terms of time, the overall computational load is still increased concerning the group size. 
Beyond all these methods, expanding the group to a larger space can increase the overhead computation.
\vspace{-6pt}
\paragraph{Steerable filters}
Steerable filters have been used in previous works to achieve equivariance. 
H-Net \citep{worrall2017harmonic} and SESN \citep{sosnovik2019scale} are examples of methods that utilized steerable filters to achieve equivariance.
Specifically, H-Net utilized complex circular harmonics as filter bases while SESN used Hermite polynomials. 
The underlying idea behind these methods is to represent filters of different sizes or rotation angles as linear combinations of fixed basis functions. 
Although this idea is related to our method, achieving rotation-scaling equivariance using this approach is difficult due to the lack of a suitable basis in image processing.
Other approaches have also improved equivariance in different aspects. 
For example, polar transformer networks \citep{esteves2018polar} generalized group-equivariance to rotation and dilation, 
while attentive group convolutions \citep{romero2020attentive} utilized the attention mechanism to generalize group convolution.
Some other techniques, such as \cite{shen2020pdo, jenner2021steerable}, proposed using partial differential operators to maintain equivariance. 
Additionally, \cite{gao2022deformation} presented a roto-scale-translation equivariant CNN, but it expanded the filters of the G-CNN in the scale dimension with a truncated interval.
\vspace{-6pt}
\paragraph{Differences between related works and ours.}
Unlike previous methods that achieve equivariance only in one aspect, our goal is to ensure continuous rotation and scaling equivariance in a single network. 
Methods that modify the filters used in the neural network to achieve equivariance require more learnable parameters and are computationally expensive. In contrast, we propose a scalable and steerable filter representation (scalable Fourier-Argand) to modify the convolution operator (SimConv) so that it embodies scale, rotation, and shift equivariance. This approach does not introduce new learnable parameters and allows us to achieve rotation and scale equivariance more efficiently than other methods.

\section{Preliminaries and notation}\label{sec:pre}
This section clarifies the notations about the sim(2) transformation and the convolution that is often mentioned later. The property of equivariant and invariant are also formulated explicitly, which is derived from our proposed method.
\subsection{Sim(2) transformation}
\vspace{-4pt}
In Euclidean geometry, two objects can be transformed into each other by a similarity transformation if they share the same shape. 
This concept of similarity is critical in instance-level computer vision, as objects remain unchanged under scale, translation, and rotation transformations. Consequently, we are motivated to investigate the notion of similarity equivariance. To this end, we introduce the Sim(2) group, which is defined by an invertible translation matrix, as follows:
\begin{equation}
{\bf T} = s  \begin{bmatrix} \mathbf{R}^\top & - s^{-1} {\bf R^\top t} \\ {\bf 0} & s^{-1} \end{bmatrix}
, \text{and} 
\hspace{8pt} 
{\bf T^{-1}} = s^{-1}  \begin{bmatrix} \mathbf{R} & \mathbf{t} \\ \mathbf{0} & s \end{bmatrix} 
\in \operatorname{Sim}(2) \subset \mathbb{R}^{3 \times 3}
\end{equation}
where $ \mathbf{R} =\resizebox{0.6in}{!}{$ \begin{bmatrix} \cos \theta &  \sin \theta\\ -\sin \theta  & \cos \theta \end{bmatrix}  $} \in \mathbb{R}^{2\times2}$ denotes the rotation matrix. The translation vector is denoted by ${\bf   t }\in \mathbb{R}^2$, and $s \in \mathbb{R}$ is the scale factor.
By composing rotation (by an angle $\theta$), scaling (by a factor $s$), and translation (by a vector $\mathbf{t}$) into a single matrix, we obtain a shape-preserving transformation that belongs to the Sim(2) group, given by ${\bf T}= {\bf A}_{s} {\bf Y_{t}  R}_{\theta}  $
Now let's consider a spatial index $\tilde x = [x_1, x_2]^\top \in \mathbb{R}^2$ on a 2D image plane. We can extend this index to a 3D homogeneous coordinate by adding a 1 as the third component, that is, $\mathbf{x} = [\tilde x, 1]^\top \in \mathbb{R}^{3}$. Then, for an input signal $f$, we define a linear transformation $L_{\bf T}: \mathbb{L}_2(X) \to \mathbb{L}_2(X)$ that transforms feature maps $f \in \mathbb{L}_2(X)$ on some space $X$ according to the Sim(2) transformation $\bf T$, written as:
\begin{equation} \label{eq:LT}
    L_{\bf T} [f](\mathbf{x})=f({\bf T}^{-1}{\bf x})
\end{equation}
The equation above can be interpreted in the following way: The left-hand side represents a linear transformation $L_{\bf T}$ applied to a set of feature maps $f$, and the right-hand side represents the value of the feature map $f$ evaluated at the point ${\bf T}^{-1}{\bf x}$. Moreover, we have $L_{\bf T} [ \cdot ] = L_{ ( {\bf A}{s}{\bf Y{t} R}{\theta} ) } [ \cdot ] = L{ {\bf A}{s} } [ L{ {\bf Y_{t}} } [ L_{ {\bf R}{\theta}} [ \cdot ]]]$, which means that $L_{\bf T}$ can be decomposed into a series of shape-preserving transformations. The formula for the transformed feature map becomes 
\begin{equation}
   f({\bf T}^{-1}{\bf x}) = f \left( [ s^{-1} ( \mathbf{R} \tilde x + 1 \mathbf{t} )^\top , 1]^\top \right)
\end{equation}
This corresponds to a rotation transformation followed by a translation and scaling.

\subsection{Formulation of Convolution}
\vspace{-4pt}
Let's consider a stack of two-dimensional features as a function $f(\cdot) =\{f_c(\cdot) \}_{c=1}^{N} : \mathbb{R}^{2}\times 1 \to \mathbb{R}^N$, where  $N$ is the number of channels. 
Similarly, a filter bank containing $M$ elements in a convolutional layer can be formalized as $\varphi = \{\psi_k\}_{k=1}^{M}$, where each filter with $N$ channels can be written as $\psi_k=\{\psi_{ k, c}\}_{c=1}^{N}$ , and $\psi_k: \mathbb{R}^{2} \times 1 \to \mathbb{R}^{N}$ is a vector-valued output filter.
Then, for a continuous input with N channels, we regard the spatial cross-correlation between the input and the continuous filter bank $\Psi$ as an operator $\Phi[\cdot]=[\cdot \star \varphi]: \mathbb{R}^{N} \to \mathbb{R}^{M}$, written as follows:
\begin{equation}
    \Phi[f]({\bf x}) = [f \star \varphi](\mathbf{x}) = \{ \sum_{c=1}^{N} \int_{{R} } f_c( {\bf x - t} )\psi_{c,k}( {\bf t} ) {\rm d} {\bf t} \}_{k=1}^{M}
\end{equation}
Without loss of generality, we can set $N=M=1$ and simplify the above formula with the following formula:
\begin{equation}
    \Phi[f](\mathbf{x}) = [f \star \varphi](\mathbf{x}) = \int_{ {R} } f( {\bf x - t} )\psi( {\bf t}) {\rm d} {\bf t}
\end{equation}
To explicate, despite the fact that $\mathbf{t}$ is a three-dimensional vector, the integration is still a double integral, along the first and second dimensions, akin to the conventional convolution. The last dimension of $\mathbf{t}$ acts only as a symbolic index, and serves as a mere placeholder. This simplified formulation is employed throughout the remainder of the paper to enhance the comprehensibility of the deduction.

\subsection{Equivariance and Invariance Property}
\vspace{-4pt}
\begin{definition}[Equivariance]
\label{def:equi}
An operator $\Phi:{\mathbb{L}}_2 (N)\to \mathbb{L}_2(M)$ is \textit{equivariant} to the transform $L_{\bf T}$ if there exists a predictable transform $\tilde L_{\bf T}$, such that the following equation holds for any ${\bf x} \in \mathbb{R}^{d+1}$.
\begin{equation}
\Phi[L_{\bf T}[f]]({\bf x}) =  \tilde L_{\bf T} [\Phi[f]]({\bf x}) 
 \end{equation} 
 \end{definition}
 If $L_{\bf T} = \tilde L_{\bf T}$, we can also say that these two operators are commutable.
This equivariance property provides structure-preserving properties for the network.
\begin{definition}[Invariance]
\label{def:invi}
An operator $\Phi:{\mathbb{L}}_2 (N)\to \mathbb{L}_2(M)$ is \textit{invariant} to the transform $L_{\bf T}$ if the following equation holds for any ${\bf x} \in \mathbb{R}^{d+1}$.
\begin{equation}
\Phi [L_{\bf T}[f]]({\bf x}) =  \Phi [f]({\bf x}) 
 \end{equation} 
 \end{definition}
This paper aims to design a convolution-like operation that is similar in function to convolution without lifting the intermediate variable size of the network (as is common in group methods) as well as satisfies the equivariance or invariance property for any similar transform.

\section{Method}\label{sec:method}
\begin{figure}
\centering
\includegraphics[width=0.95\linewidth]{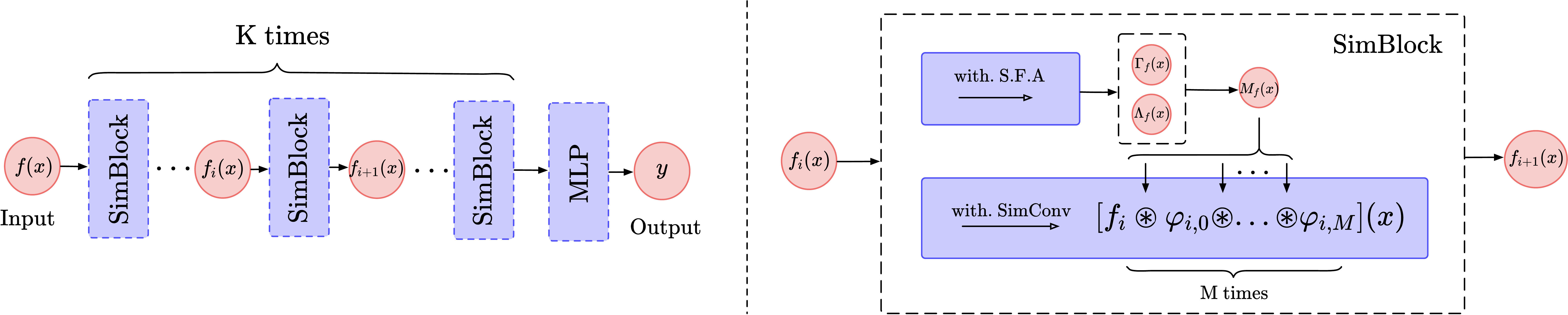}
\caption{SREN Architecture Overview: The architecture comprises multiple SimBlocks, each utilizing the scalable Fourier-Argand filter (\cref{sec:sfa}) to extract geometry information ${\bf M}_f({\bf x})$. 
The similarity convolution structure (\cref{sec:simconv}) combines this indicator to achieve similarity equivariance. 
A head layer is added to convert the equivariant output to invariant output (\cref{sec:imp}).}
\label{fig:alg}
\end{figure}
In this section, we explore the feasibility of achieving simultaneous equivariance in rotation, scale, and translation for traditional convolutional neural networks. 
The underlying intuition behind the method is: we first obtain the local scale and orientation of the image. Then, this local information is used to adapt the scale and direction of the filter used for the convolution. 
Moreover, this (image-dependent) spatially-varying convolution has an efficient implementation.

\subsection{Scalable Fourier-Argand Representation}
\label{sec:sfa}
\vspace{-4pt}
We propose a representation called the Scalable Fourier-Argand Representation to retrieve local geometric information and use it as a covariance indicator in \cref{cova}. It is exact and steerable. This representation can be defined as follows:
\begin{definition}[Scalable Fourier-Argand representation]
Let $h$ be a square-integrable function expressed as $h(r, \theta)$ in polar coordinates.
Its scalable Fourier-Argand representation is defined as the following series form:
\begin{equation}
	\begin{aligned}
		h(r, \theta) & = \sum_{k_1, k_2 \in \mathbb{Z}}   \left( h_{k_1, k_2}  r^{m_{k_1}} \exp \left(  i (k_1 \theta +   k_2 \frac{2 \pi \ln r}{\ln b / a} ) \right) \right)  = \sum_{k_1,k_2 \in \mathbb{Z}} H_{k_1, k_2}(r, \theta)
	\end{aligned}
\end{equation}
where $\mathbb{Z}$ denotes the set of all integers, and $m_k$ can be chosen as a constant value.
The function has limited support, i.e., $(r, \theta) \in [0, 2 \pi[ \times [a, b[$, and $k_1, k_2$ can be truncated for a subset of integers to approximate $h$.
\end{definition}
To compute the coefficient of each item $h_{k_1, k_2}$, we can use the following proposition:
\begin{proposition}\label{prop:sfa}
Let $h_{k_1, k_2}: \mathbb{Z}^{2} \rightarrow \mathbb{C}$ as the coefficient of each item. We can compute it as follows,
\begin{equation}
\begin{aligned}
	 h_{k_1, k_2} =& \frac{1}{\ln \frac{b}{a}} \int_{\ln a}^{\ln b}  \frac{1}{2 \pi} \int_0^{2 \pi}  
	 h(e^\rho, \theta)  \exp \left(  -i k_1 \theta -i  k_2 \frac{  2 \pi  \rho}{\ln b / a} - \rho m \right) d \theta  d \rho
\end{aligned}			
\end{equation}
\end{proposition}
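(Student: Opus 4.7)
The plan is to recognize the scalable Fourier-Argand representation as an ordinary double Fourier series after a logarithmic change of the radial variable, and then invert it by orthogonality. First I would substitute $r = e^\rho$ with $\rho \in [\ln a, \ln b)$, so that $r^{m_{k_1}} = e^{\rho m_{k_1}}$ and $\frac{2\pi \ln r}{\ln(b/a)} = \frac{2\pi \rho}{\ln(b/a)}$. Using the stated convention $m_{k_1}=m$ (constant in $k_1$), I would move the factor $e^{\rho m}$ to the left-hand side, obtaining
\begin{equation*}
h(e^\rho,\theta)\,e^{-\rho m} \;=\; \sum_{k_1,k_2\in\mathbb{Z}} h_{k_1,k_2}\,\exp\!\left(i k_1 \theta\right)\exp\!\left(i k_2 \tfrac{2\pi\rho}{\ln(b/a)}\right),
\end{equation*}
which is a textbook Fourier series in $(\theta,\rho)$ on the rectangle $[0,2\pi)\times[\ln a,\ln b)$, the radial interval having length $\ln(b/a)$ equal to the period of the second exponential.

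Next I would apply orthogonality. For any integers $l_1,l_2$ one has
\begin{equation*}
\frac{1}{2\pi}\int_0^{2\pi} e^{i(k_1-l_1)\theta}\,d\theta = \delta_{k_1,l_1},\qquad \frac{1}{\ln(b/a)}\int_{\ln a}^{\ln b} e^{i(k_2-l_2)\frac{2\pi\rho}{\ln(b/a)}}\,d\rho = \delta_{k_2,l_2}.
\end{equation*}
Multiplying the previous display by $\exp\!\bigl(-i l_1 \theta - i l_2 \tfrac{2\pi\rho}{\ln(b/a)}\bigr)$, integrating over the rectangle against the normalized measure $\tfrac{d\theta\,d\rho}{2\pi\ln(b/a)}$, and exchanging the sum with the integrals isolates the single surviving term $h_{l_1,l_2}$. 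Relabelling $(l_1,l_2)\mapsto(k_1,k_2)$ and folding the factor $e^{-\rho m}$ (coming from the division above) back inside the integrand produces exactly the claimed formula.

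The main obstacle is essentially a soundness concern rather than a technical difficulty: one must justify the interchange of summation and integration. This is where the square-integrability assumption on $h$ enters. Provided $h(e^\rho,\theta)\,e^{-\rho m}$ belongs to $L^2([\ln a,\ln b)\times [0,2\pi))$, standard Hilbert-space theory for orthonormal Fourier bases guarantees $L^2$-convergence of the expansion, and the coefficient extraction above becomes the classical Fourier-inversion identity. In the truncated setting advocated by the paper, where only finitely many $(k_1,k_2)$ are kept, the interchange is trivial and the formula is then a finite-sum orthogonality calculation. Apart from this justification, the proof is a one-line reduction to the two-dimensional Fourier coefficient formula composed with the reweighting $r\mapsto e^\rho$, $h\mapsto e^{-\rho m}h$.
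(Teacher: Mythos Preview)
Your proposal is correct and follows essentially the same approach as the paper: substitute $r=e^\rho$, recognize the resulting expression as a double Fourier series on $[0,2\pi)\times[\ln a,\ln b)$, and extract the coefficients by orthogonality of the exponentials in $\theta$ and $\rho$. The only difference is cosmetic---you divide by $e^{\rho m}$ up front to obtain a clean Fourier series, whereas the paper carries the factor through and cancels it inside the integral---and your explicit remark on justifying the sum/integral interchange via $L^2$ theory is a welcome addition that the paper omits.
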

This formula represents the filter as a composition of elementary feature basis. 
The part about $\theta$ can be seen as the Fourier series of filters on the Argand plane, inspired by \cite{zhao2020fourier}. 
The filter itself is related to the harmonic filter \citep{worrall2017harmonic} in a sense. 
For the scaling part, we use the logarithmic method to make the scaling of the size a linear shift. 
We also restrict the support of the function to a plane where the radius ranges from $a$ to $b$, where $a \rightarrow 0^{+}$. 
In practice, we set $m_k=-1$, which acts like a window function and ensures that the function vanishes at infinity. 
This is why we call it the Scalable Fourier-Argand Representation. 
The proof of the proposition can be found in \cref{app:sfa}.

One benefit of our proposed expression is that its basis functions are steerable for rotation and scalable for scaling. Specifically, consider a transformation matrix ${\bf T}= {\bf A}_{s} {\bf Y_{0}  R}_{\alpha} $, which only involves rotation and scaling. Using this matrix, we have the following equation:
\begin{equation}\label{prop}
\begin{aligned}
 L_{\bf T}[H_{k_1,k_2}](r, \theta)  
 &= h_{k_1, k_2} \cdot \exp ( i k_1 (\theta-\alpha) + i   k_2 \frac{2 \pi \ln r / s}{\ln b / a} ) \cdot (r s^{-1})^{m_{k_1}}  \\
&= H_{k_1,k_2}(r, \theta) \cdot
\exp (-i k_1 \alpha -i k_2 \frac{2 \pi \ln s}{\ln b / a} ) s^{-m_{k_1}} 
\end{aligned}
\end{equation}
This equation holds for any $ \alpha \in [0, 2 \pi)$ and $ s \in (a, b)$.
The following proposition is easily verified:
\begin{proposition}
For a continuous filter $h \in \mathbb{R}^2 $ that can be decomposed by a set of basis, let ${\bf T}={\bf A}_{s} {\bf Y}_{\bf 0} {\bf R}_{\alpha}$ as a transformation matrix. 
Then the transformed filter of $h$, noted as $L_{\bf T}[h]$, can still be represented by the same basis, with a steerable linear combination:
\begin{equation}
\begin{aligned}
     L_{\bf T}[h](r, \theta) &= \sum_{k_1, k_2} L_{\bf T}[H_{k_1,k_2}](r, \theta) \\
    &= \sum_{k_1, k_2} H_{k_1, k_2}(r,\theta) \cdot \left( \exp(-i k_1 \alpha - i k_2 \frac{2 \pi \ln s}{\ln b / a} ) s^{-m_{k_1}} \right) \\
\end{aligned}
\end{equation}
\end{proposition}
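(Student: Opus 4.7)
The proof I would write is essentially a one-line consequence of the basis-level computation in equation~(\ref{prop}) combined with the linearity of $L_{\bf T}$. My plan is to first record that the operator $L_{\bf T}[\cdot] = (\cdot)({\bf T}^{-1}{\bf x})$ is a \emph{linear} operator on $\mathbb{L}_2$: for any scalars $\lambda_1, \lambda_2$ and any $f_1, f_2$,
\begin{equation*}
L_{\bf T}[\lambda_1 f_1 + \lambda_2 f_2]({\bf x}) = \lambda_1 f_1({\bf T}^{-1}{\bf x}) + \lambda_2 f_2({\bf T}^{-1}{\bf x}) = \lambda_1 L_{\bf T}[f_1]({\bf x}) + \lambda_2 L_{\bf T}[f_2]({\bf x}).
\end{equation*}
This extends to countable sums whenever the sum converges in $\mathbb{L}_2$, because the change of variables ${\bf x} \mapsto {\bf T}^{-1}{\bf x}$ is an isometry of $\mathbb{L}_2(\mathbb{R}^2)$ up to a scale Jacobian, hence $L_{\bf T}$ is bounded and therefore continuous.

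Next, I would invoke the decomposition assumed on $h$, namely $h = \sum_{k_1,k_2} H_{k_1,k_2}$, and apply $L_{\bf T}$ termwise by the continuity just noted:
\begin{equation*}
L_{\bf T}[h](r,\theta) = L_{\bf T}\!\left[\sum_{k_1,k_2} H_{k_1,k_2}\right](r,\theta) = \sum_{k_1,k_2} L_{\bf T}[H_{k_1,k_2}](r,\theta).
\end{equation*}
At this point the proposition reduces to identifying each transformed basis element. That identification is exactly equation~(\ref{prop}) derived just above, which states
\begin{equation*}
L_{\bf T}[H_{k_1,k_2}](r,\theta) = H_{k_1,k_2}(r,\theta)\cdot \exp\!\left(-i k_1 \alpha - i k_2 \frac{2\pi \ln s}{\ln b/a}\right) s^{-m_{k_1}}.
\end{equation*}
Substituting this back into the termwise sum yields the claimed formula directly.

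The only genuine obstacle is the interchange of $L_{\bf T}$ with the infinite sum; everything else is linearity plus a quotation of equation~(\ref{prop}). I would handle the interchange by noting that the Scalable Fourier-Argand representation of $h$ converges in $\mathbb{L}_2$ on the annular support $[a,b[\times[0,2\pi[$ (it is a Fourier-type expansion on a compact domain after the logarithmic reparameterisation $\rho = \ln r$), and that $L_{\bf T}$ is a bounded operator on $\mathbb{L}_2$ because ${\bf T}$ is an invertible affine map of $\mathbb{R}^2$ with Jacobian $s^2$. Continuity of bounded linear operators then licenses moving $L_{\bf T}$ inside the sum, and the proof is complete.
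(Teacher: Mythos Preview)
Your proposal is correct and follows exactly the paper's approach: the paper simply states that the proposition ``is easily verified'' immediately after establishing equation~(\ref{prop}), so the intended argument is precisely linearity of $L_{\bf T}$ applied termwise to the decomposition $h = \sum_{k_1,k_2} H_{k_1,k_2}$ followed by substitution of~(\ref{prop}). If anything, your treatment of the interchange of $L_{\bf T}$ with the infinite sum is more careful than what the paper provides.
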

Using this property, we can keep the basis $H_{k_1,k_2}$ fixed and estimate the filter $h$ for different rotations and scales by using different linear combinations. Additionally, to convolve the input signal $f$ with $h$ and its various $L_{\bf T}[h]$, we can pre-convolve the image with the basis $H_{k_1, k_2}$ of the original filter. 
Moreover, to achieve more robust results than traditional convolution, we use the normalized cross-correlation denoted as $\star$. This allows us to calculate the intermediate variable $f_{k_1,k_2}$ using the following formula:
\begin{equation}\label{cova}
f_{k_1, k_2}({\bf x}) = [f \star H_{k_1, k_2} ]({\bf x})=\frac{[f * H_{k_1, k_2}]({\bf x}) - \mu_i ({\bf x})  \mu_{\scriptscriptstyle H_{k_1, k_2} }}{\sigma_i({\bf x})\sigma_{
{\scriptscriptstyle H_{k_1, k_2}}
}}
\end{equation}
where $*$ denotes convolution, while $\mu$ and $\sigma$ present the mean and variance of the signal or basis filters.
By using these bases, we can determine the optimal orientation and scale through the calculation of the argmax, expressed as follows:
\begin{equation}
\begin{aligned}
    \left[\Lambda_f ({\bf x}), \Gamma_f ({\bf x}) \right] &= \arg\max_{(\lambda, \gamma)} \sum_{k_1, k_2=-K}^{K} f_{k_1, k_2}({\bf x}) \cdot c_{k_1, k_2} (\lambda, \gamma)  = \arg\max_{\gamma, \lambda} \mathbf{c}_{\gamma, \lambda} \mathbf{F} 
\end{aligned}
\end{equation}
Where $c_{k_1, k_2} (\lambda, \gamma) =  \exp (-i k_1 \gamma -i  k_2 \frac{ 2 \pi \ln \lambda}{\ln b / a} ) \lambda^{-m}$ is a coefficient that only relys on $\lambda$ and $\gamma$.
$\mathbf{c}_{\gamma, \lambda} $ and $\mathbf{F}$ are vectors of all possible $f_{k_1, k_2}$ and $c_{k_1, k_2}$
$\Lambda_f ({\bf x}), \Gamma_f ({\bf x})$ can be understood as the projection of signal $f$ for the orientation and scale aspect. 
These two indicators meet the following properties,
\begin{lemma}
Let $\mathbf{T}=\mathbf{A}_s \mathbf{Y}_t \mathbf{R}_{\alpha}$ as a similarity transformation. 
Then for a input image $f$ and its distorted version $L_{\bf T}[f]({\bf x})$,
for any position $x$, we can have a relationship of this pair of images by following property,
\begin{equation}\label{eq:pres}
\begin{aligned}
	\Lambda_{L_{\bf T} [f]}({\bf  x}) &= \Lambda_{f}({\bf T}^{-1}{\bf x}) \cdot s \\
	\Gamma_{L_{\bf T} [f]}({\bf  x}) &= \Gamma_{f}({\bf T}^{-1}{\bf x}) + \alpha \\
\end{aligned}
\end{equation}
\end{lemma}
This is the condition we achieved and applied in \cref{sec:simconv}.
The proof can be found in \cref{app:dp}.

\subsection{Similarity Convolution}
\label{sec:simconv}
\vspace{-4pt}
We propose SimConv, a new convolution-like operation that serves as an alternative to traditional convolution. Our design aims to meet the following criteria:
1). It should exhibit equivariance properties for rotation, scale, and translation theoretically.
2). It should incorporate learnable parameters and extract image features by ``blending'' one function with another.
3). It should have a computational complexity similar to that of traditional convolution, avoiding the computational disaster problem faced by group convolution when extending the group size.
These criteria drive us design the SimConv as follows,
\begin{definition}[Similarity convolution]
\label{def:simconv}
The similarity convolution between the input signal $f$ and the filter $\varphi$ is defined as 
\begin{equation}\label{eq:simconv}
    [f \circledast \varphi]({\bf x}) := \int_{R} f({\bf x}+{\bf M}_f({\bf x}){\bf t}) \varphi({\bf t}) {\rm d} {\bf t}
     = \Phi[f]({\bf x}) 
\end{equation}
where ${\bf M}_f({\bf x})$ is a pixel-wise matrix defined as: 
\begin{equation}\label{eq:mfx}
    {\bf M}_f({\bf x}) = {\bf A}_{({\Lambda_{f}}({\bf x}))} {\bf R}_{({\Gamma_{f}}({\bf x}))}  \in \operatorname{Sim}(2)
\end{equation}
\end{definition}
The SimConv operator exhibits a convolution-like structure between the feature map $f$ and the learnable filter $\varphi$. 
This becomes apparent when we substitute the variable with ${\bf t} = {\bf M}^{-1}_f( {\bf x}) {\bf \tilde{t}}$. 
Moreover, it can be degenerate to traditional convolution when replace ${\bf M}_f({\bf x})$ to the identity matrix for all ${\bf x}$. 
Furthermore, we define the SimConv as an operator
$\Phi [  \hspace{1pt} \cdot \hspace{1pt} ]=[\hspace{1pt} \cdot  \circledast \varphi]: \mathbb{R}^{N} \to \mathbb{R}^{M}$, where we set $M=N=1$ for readability.
This operator is, in fact, equivariant for rotation, scaling, and translation, which we can verify below when considering ${\bf M}_f({\bf x})$ in \cref{eq:mfx} and combining it with the condition in \cref{eq:pres}.
\begin{equation}
\begin{aligned}
 {\bf M}_{L_{\bf T}[f]}({\bf x}) {\bf T}^{-1}
&=  {\bf A}_{(\Lambda_{L_{\bf T} [f]}({\bf  x}) )} {\bf R}_{(\Gamma_{L_{\bf T} [f]}({\bf  x}))} {\bf T}^{-1} \\
&= {\bf A}_{(\Lambda_{f}({\bf T}^{-1}{\bf x}) )} {\bf A}_s
{\bf R}_{(\Gamma_{f}({\bf T}^{-1}{\bf x}) )} {\bf R}_{\alpha} {\bf T}^{-1} \\
&= {\bf A}_{(\Lambda_{f}({\bf T}^{-1}{\bf x}) )} 
{\bf R}_{(\Gamma_{f}({\bf T}^{-1}{\bf x}) )} 
=  {\bf M}_f({\bf T}^{-1}{\bf x})  \\
\end{aligned}
\end{equation}
This leads to the following condition:
$    {\bf T}^{-1} = {\bf M}^{-1}_{L_{\bf T}[f]}({\bf x}) \cdot {\bf M}_f({\bf T}^{-1}{\bf x})$ ,$ \forall {\bf x} \in \mathbb{R}^{2} \times 1$.
If we apply a similarity transformation $L_{\bf T}$, which rotates the signal by $\alpha$ degrees centered at $t$ and scales it by $s$, to the signal $f$ following \cref{eq:LT}, we obtain the following transformation of the SimConv output:
\begin{equation}
\begin{aligned}
\label{eq:PHLT}
    \Phi[L_{\bf T}[f]]({\bf x}) 
    &= [L_{\bf T}[f] \circledast \varphi]({\bf x}) = \int_R L_{\bf T}[f]({\bf x} + {\bf M}_{L_{\bf T}[f]}({\bf x}){\bf t}) \varphi({\bf t}) {\rm d} {\bf t} \\
    &= \int_R f( {\bf T}^{-1} ({\bf x} + {\bf M}_{L_{\bf T}[f]}({\bf x}){\bf t})) \varphi({\bf t}) {\rm d} {\bf t} 
    = \int_R f( {\bf T}^{-1} {\bf x} + {\bf T}^{-1} {\bf M}_{L_{\bf T}[f]}({\bf x}){\bf t}) \varphi({\bf t}) {\rm d} {\bf t} \\
\end{aligned}    
\end{equation}
Furthermore, we can deduce the commutator operator of the above as
\begin{equation}
\begin{aligned}
\label{eq:LTPH}
    L_{\bf T}[\Phi[f]]({\bf x}) 
    &= L_{\bf T}[[f \circledast \varphi]]({\bf x}) = L_{\bf T}[\int_{R} f({\bf x}+{\bf M}_f({\bf x}){\bf t}) \varphi({\bf t}) {\rm d} {\bf t} ] \\
    &= \int_{R} f({\bf T}^{-1}{\bf x}+{\bf M}_f({\bf T}^{-1}{\bf x}){\bf t}) \varphi({\bf t}) {\rm d} {\bf t} 
    = \int_{R} f({\bf T}^{-1}{\bf x}+{\bf T}^{-1} {\bf M}_{L_{\bf T}[f]}({\bf x}){\bf t}) \varphi({\bf t}) {\rm d} {\bf t} \\
\end{aligned}
\end{equation}
By replacing the second ${\bf T}$ in \cref{eq:LTPH} and comparing it with \cref{eq:PHLT}, we have,
\begin{equation}\label{eq:equi}
    \Phi[L_{\bf T}[f]]({\bf x})  =  L_{\bf T}[\Phi[f]]({\bf x}) 
\end{equation}
This implies that applying a similarity transform to the input signal $f$ and then applying the SimConv is equivalent to first applying the SimConv and then the transform, which means that the proposed SimConv satisfies the equivalence property when using the scalable Fourier Argand representation. 
Moreover, if each layer in the network satisfies this property, the transformation can be propagated from the first layer to the last layer. This can be expressed using the following formula:
\begin{equation}\label{eq:multi}
\begin{aligned}
    f_n({\bf x}) &= [L_{\bf T} [f_0] \circledast \varphi _{0} \circledast ... \circledast \varphi_n]({\bf x}) = [L_{\bf T} [f_0 \circledast \varphi_{0} \circledast ... \circledast \varphi_n]]({\bf x}) 
\end{aligned}
\end{equation}
There are several ways to convert equivariant features into invariant features. 
One option is to add an adaptive max pooling layer at the end of the network. 
Let $P[ \hspace{1pt} \cdot \hspace{1pt} ]=\text{torch.nn.AdaptiveMaxPool2d(1)}$ be a function that takes the maximum response over the entire spatial domain. Since the maximum value is not affected by the position distortion of the feature, the output is invariant. 
Using \cref{eq:equi}, we obtain:
\begin{equation}
P[\Phi [L_{\bf T}[f]]]({\bf x})  = P[L_{\bf T}[\Phi[f]]]({\bf x}) = P[\Phi[f]]({\bf x}) 
\end{equation}
We can see that the transformation matrix ${\bf T}$ has no influence on the output, which can be useful in tasks such as classification.

\subsection{Discretization method}
\label{sec:imp}
\vspace{-4pt}
Although the continuous formulation in the previous section is necessary to go, 
from the intuitive approach (find scale + orientation, then filter accordingly), 
to the efficiently implementable formulation (\cref{eq:pres}) through a change of variables in an integral,
digital images or feature maps are usually discrete data aligned on a mesh grid. 
Therefore, in this section, we provide a detailed description of the discretization of the integral and its approximation implementation.
We rewrite \cref{eq:simconv} in discrete form as follows:
\begin{equation}
    \Phi[f](x) = [f \circledast \varphi]({\bf x}) =  \frac{V}{n} \sum_{t \in \mathcal{R}} f({\bf x}+{\bf M}_f({\bf x}) {\bf t}) \varphi({\bf t}) 
\end{equation}
where $\mathcal{R}$ is the support of $\varphi$ and $n$ is the number of elements in the set $\mathcal{R}$.
For example, in the case of a $3\times 3$ convolution, 
$\mathcal{R} = \{ (t_x, t_y, 1)^T | t_x, t_y \in \{-1, 0, 1\} \}$. 
We define ${\bf y_t} ={\bf x}+{\bf M}_f({\bf x}) t$, which is generally a fractional location index. 
We approximate $f({\bf y_t})$ using bilinear interpolation, written as 
$    f({\bf y_t}) = \sum_{{\bf m}} G({\bf y_t}, {\bf m}) f({\bf m})$, 
where ${\bf m} = (m_1, m_2, 1)^T \in \mathbb{Z}^3$ and $G$ is the bilinear interpolation kernel defined as $G(m,n) = g(m_1, n_1) g (m_2, n_2)$, where $g(a,b) = max(0, 1-|a-b|)$.
Thus, the similarity convolution becomes:
\begin{equation}
    \Phi[f]({\bf x}) = [f \circledast \varphi]({\bf x}) =  \frac{V}{n} \sum_{{\bf t} \in \mathcal{R}} \sum_{{\bf m}} G({\bf y_t}, {\bf m}) f({\bf m})\varphi({\bf t}) 
\end{equation}
With this implementation, we make the similarity convolution differentiable. The gradient of the input is:
\begin{equation}
    \frac{\partial \Phi[f]({\bf x})}{\partial {\bf x}} =  \frac{V}{n} \sum_{{\bf t} \in \mathcal{R}} \sum_{{\bf m}}  \frac{\partial G({\bf y_t}, {\bf m})}{\partial {\bf x}}  f({\bf m})\varphi({\bf {\bf t}}) 
\end{equation}
Since $G$ is a differentiable function, and it is non-zero only when $m$ aligns on the grid of $y_t$, it does not require too much computation.

\section{Experiments} \label{sec:exp}
\subsection{Character recognition task}
\vspace{-4pt}
\paragraph{Dataset.}
The \textsc{MNIST-rot-12k dataset} \citep{larochelle2007empirical} is commonly used to evaluate rotation-equivariant algorithms. 
However, it is inadequate to test the algorithms' equivariance on both scaling and rotation. 
To facilitate fair model comparisons, we construct the \textsc{SRT-MNIST dataset} by modifying the original \textsc{MNIST} dataset in a similar way. 
Specifically, we pad the images to $56 \times 56$ pixels, keep the training set unchanged, and randomly apply rotations, scalings, and translations to each test image within the ranges of $\theta = [0, 2 \pi)$, $s=[1, 2[$, and $t=\pm 10$. This out-of-distribution setting of the test images provides a sufficient evaluation of the model's generalization ability.
\begin{wraptable}[11]{R}{0.6\textwidth}
\vspace{-15pt}
\centering
\caption{Generalization ability test on SRT-MNIST} 
\label{tab:srt}
 \resizebox{0.6\textwidth}{!}{
\begin{tabular}{l  cccc}
\toprule
Methods & \multicolumn{4}{c}{Type of the test set.}  \\
\cmidrule{2-5}
 & \small{MNIST} & \small{R-MNIST} & \small{S-MNIST} & \small{SRT-MNIST} \\
\midrule[0.8pt]
CNNs & 99.46 & 44.41 & 73.21 & 33.56 \\
SO(2)-Conv & 99.23 & \textbf{97.18} & 72.85 & 70.72 \\
$\mathbb{R}^{*}$-Conv  & 99.31 & 35.23 &  \textbf{99.21} &  39.2 \\
SREN & 99.12 &  96.91 & 98.48 & \textbf{92.3} \\
\midrule[0.3pt]
SREN+ & 99.42&  98.3 & 99.28 & 95.1 \\
\bottomrule
\end{tabular}
}
\end{wraptable}
\vspace{-6pt}
\paragraph{Experimental setup.} 
We adopt ResNet-18 \cite{he2016deep} as the baseline architecture 
and replace every convolutional layer with our proposed SimConv 
while retaining the same trainable parameters. 
The scalable Fourier-Argand filters are shared across all layers. 
We use Adam optimizer \citep{kingma2015adam} with a weight decay of 0.01, 
initialize the weights with Xavier \citep{glorot2010understanding}, 
and set the learning rate to 0.01, 
which decays by a factor of 0.1 every 50 epochs. 
We set the batch size to 128 and stop training after 200 epochs. 
The model has 11.68M parameters, 
and the FLOPs for an input image of $56 \times 56$ pixels is 0.12G.
\vspace{-6pt}
\paragraph{Generalization ability study.} 
We ablate our method's equivariance on rotation and scaling separately and concurrently using \textsc{SRT-MNIST}, where \textsc{R-MNIST} and \textsc{S-MNIST} are datasets with only rotation or scaling applied to test images. 
We compare SREN with SO(2)-Conv and $\mathbb{R}^{*}$-Conv, which have the same structure but lack equivariance partially by setting all $\Gamma_f$ or $\lambda_f$ to unitary. Additionally, we include SREN+ with random image rotation and scaling by $\pm 30$ degrees and $[0.8, 1.2]$, respectively.
From \cref{tab:srt}, our SREN algorithm achieves an accuracy rate of over 95\% on every dataset, whereas the CNNs overfit the original dataset with limited generalization ability.
\vspace{-6pt}
\paragraph{Equivariance error analysis.}
We conduct numerical validation of the method's equivariance using the equivariant error. This helps us assess the stability of the equivariance property and identify the key factor that affects its stability.
The equivariant error is defined by measuring the normalized $L-2$ distance as follows,
\begin{equation}
\text{Error} = \frac{|| L_{\bf T}[\Phi [f]] - \Phi [L_{\bf T}[f ]] \hspace{1pt} ||^2_{\tiny{F}}}{ || L_{\bf T}[\Phi [f]] ||^2_F  }
\end{equation}
where $||\cdot||_{F}$ is the Frobenius norm.
The formula is the relative percentage error of the two obtained features after the input is first convolved then transformed, and vice versa. 
We compare the $k$-th layer feature with the convolutional network.
The average error, shown in \cref{fig:equiv}, is below 0.01, indicating that our network achieves high-quality equivariance. 
Furthermore, we tested a specific case where the image is rotated by 90 degrees, and the equivariance error is negligible, with a value of $1.73 \times 10^{-6}$. This result demonstrates that our method can achieve extremely accurate equivariance when without discretization issues.
\vspace{-4pt}
\subsection{Natural Image Classification}
\vspace{-4pt}
\paragraph{experimental setup.}
\begin{wraptable}[10]{R}{0.78\textwidth}
\vspace{-18pt}
\caption{Performance and equivariance comparison on the STL-10 dataset.}
\label{tab:stl}
\centering
 \resizebox{0.78\textwidth}{!}{
 \begin{tabular}{l c c c c c }
\toprule
Methods   & $\mathcal{R}$-Equi & $S$-Equi   & Conti & ID Accuracy (\%) & OOD Accuracy (\%) \\
\midrule[0.77pt]
ResNet-16 & \xmark & \xmark & \xmark & $82.66\pm 0.53$ & $37.63 \pm 1.95$\\
    RDCF & \cmark & \xmark & \xmark & $83.66\pm 0.57$ & $51.12\pm 4.21$\\    
    SESN & \xmark & \cmark & \cmark & $83.79\pm 0.24$ & $47.26\pm 0.63$\\
    SDCF & \xmark & \cmark & \xmark & $83.83 \pm 0.41$ & $43.60 \pm 0.87$\\    
    RST-CNN & \cmark & \cmark & \xmark & $84.08 \pm 0.11$ & $58.31 \pm 3.62$\\
\midrule[0.3pt]
\textbf{SREN}  & \cmark & \cmark &  \cmark  & $ \mathbf{ 85.25 \pm 0.61}$      & $ \mathbf{ 63.42 \pm 2.57 }$  \\
\bottomrule
\end{tabular}
}
\label{table:stl}
\vspace{-4pt}
\end{wraptable}
To evaluate the generalization ability of our method, 
we conduct experiments on the STL-10 dataset \cite{coates2011analysis}. 
The labeled subset is an excellent choice to evaluate how efficient the network can use these limited training samples and how much the network's generalization ability is when the training set is small.
We evaluate the methods using in-distribution testing (ID), which keeps the test dataset unchanged, 
and out-of-distribution(OOD) testing, which randomly rotates and scales the dataset. 
The OOD test measures the ability of a method to handle the never-seen inputs.
We use a ResNet \citep{he2016deep} with 16 layers as the backbone and replace all convolutional layers with our SimConv layers. 
The network is trained for 1000 epochs with a batch size of 128, using Adam as the optimizer. 
The initial learning rate is set to 0.1 and adjusted using a cosine annealing schedule during training. 
Following \cite{zhu2019scaling}, data augmentation without scaling and rotation is also applied.
We compare our method with several other methods, 
including the Rotation Decomposed Convolutional Filters network (RDCF) \citep{cheng2019rotdcf}, 
Scale-Equivariant Steerable Networks (SESN) \citep{sosnovik2019scale}, 
Scale Decomposed Convolutional Filters network (SDCF) \citep{zhu2019scaling}, 
and Roto-Scale-Translation Equivariant CNNs \citep{gao2022deformation}. 
We choose ResNet with 16 layers as the backbone for all methods to make the model parameters comparable and ensure a fair comparison.
\vspace{-6pt}
\paragraph{Results \& Discussion.}
\begin{wrapfigure}[15]{R}{0.45\textwidth}
\centering
\vspace{-15pt}
\includegraphics[width=0.95\linewidth]{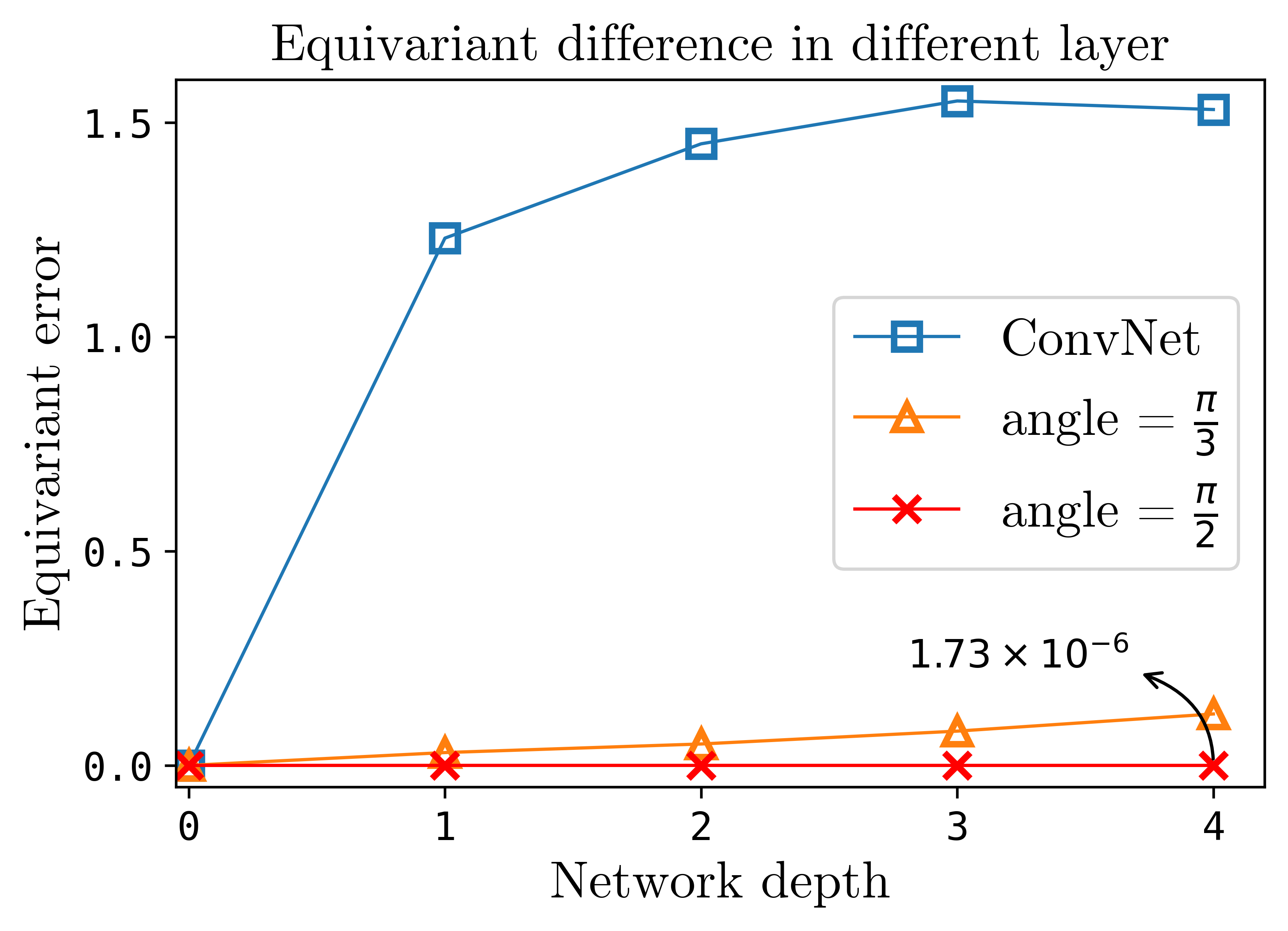} 
\vspace{-15pt}
\caption{Our method exhibits high equivariant quality in a multi-layer network, and achieving an error level of $10^{-6}$ when there is no discretization approximation.}
\label{fig:equiv}
\end{wrapfigure}
\cref{tab:stl} presents our main results compared to recent baselines. 
The columns $\mathcal{R}$-Equi and $S$-Equi indicate whether a method achieves equivariance in rotation or scaling, respectively. 
The column \textrm{Conti} indicates whether a method achieves equivariance at a continuous scale or rotation. 
Our method achieved the highest accuracy among all other approaches, particularly for the out-of-distribution test, 
demonstrating its superior generalization ability. 
It should be noted that
we compare our method with those that only``partially'' achieve equivariance, as there lacks like-for-like methods (achieving \textit{both} rotation and scaling equivariant in the \textit{continuous} group) to compare.
While \cite{macdonald2022enabling} guarantees equivariance to any finite-dimensional Lie group, its memory efficiency limits its scalability to large networks and comparison with our method. Achieving equivariance efficiently is challenging, being able to achieve this property itself is already another state-of-the-art.

\section{Conclusion}
Although there have been numerous studies on how to achieve rotation and scale equivariant, achieving continuous equivariant in rotation and scaling simultaneously is novel, to our knowledge.
In this paper, we propose to develop scalable steerable filters based on the Fourier-Argand representation and to use the local scale and orientation provided by these filters to empower the convolution operator with local scale and rotation equivariant: SimConv.
Mathematical and experimental analyses are detailed to explain why it works and to what extent it can achieve the desired property.

\newpage
\bibliography{iclr2023_conference}

\begin{thebibliography}{32}
\providecommand{\natexlab}[1]{#1}
\providecommand{\url}[1]{\texttt{#1}}
\expandafter\ifx\csname urlstyle\endcsname\relax
  \providecommand{\doi}[1]{doi: #1}\else
  \providecommand{\doi}{doi: \begingroup \urlstyle{rm}\Url}\fi

\bibitem[Bekkers(2020)]{bekkers2019b}
Erik~J Bekkers.
\newblock B-spline cnns on lie groups.
\newblock In \emph{International Conference on Learning Representations}, 2020.
\newblock URL \url{https://openreview.net/forum?id=H1gBhkBFDH}.

\bibitem[Cheng et~al.(2019)Cheng, Qiu, Calderbank, and Sapiro]{cheng2019rotdcf}
Xiuyuan Cheng, Qiang Qiu, Robert Calderbank, and Guillermo Sapiro.
\newblock Rotdcf: Decomposition of convolutional filters for
  rotation-equivariant deep networks.
\newblock In \emph{International Conference on Learning Representations 2019
  (ICLR'19)}, 2019.

\bibitem[Coates et~al.(2011)Coates, Ng, and Lee]{coates2011analysis}
Adam Coates, Andrew Ng, and Honglak Lee.
\newblock An analysis of single-layer networks in unsupervised feature
  learning.
\newblock In \emph{Proceedings of the fourteenth international conference on
  artificial intelligence and statistics}, pp.\  215--223. JMLR Workshop and
  Conference Proceedings, 2011.

\bibitem[Cohen \& Welling(2016)Cohen and Welling]{cohen2016group}
Taco Cohen and Max Welling.
\newblock Group equivariant convolutional networks.
\newblock In \emph{International conference on machine learning}, pp.\
  2990--2999. PMLR, 2016.

\bibitem[Cohen \& Welling(2017)Cohen and Welling]{cohen2016steerable}
Taco~S. Cohen and Max Welling.
\newblock Steerable {CNN}s.
\newblock In \emph{International Conference on Learning Representations}, 2017.
\newblock URL \url{https://openreview.net/forum?id=rJQKYt5ll}.

\bibitem[Cohen et~al.(2019)Cohen, Geiger, and Weiler]{cohen2019general}
Taco~S Cohen, Mario Geiger, and Maurice Weiler.
\newblock A general theory of equivariant cnns on homogeneous spaces.
\newblock \emph{Advances in neural information processing systems}, 32, 2019.

\bibitem[Esteves et~al.(2018)Esteves, Allen-Blanchette, Zhou, and
  Daniilidis]{esteves2018polar}
Carlos Esteves, Christine Allen-Blanchette, Xiaowei Zhou, and Kostas
  Daniilidis.
\newblock Polar transformer networks.
\newblock In \emph{International Conference on Learning Representations}, 2018.
\newblock URL \url{https://openreview.net/forum?id=HktRlUlAZ}.

\bibitem[Finzi et~al.(2020)Finzi, Stanton, Izmailov, and
  Wilson]{finzi2020generalizing}
Marc Finzi, Samuel Stanton, Pavel Izmailov, and Andrew~Gordon Wilson.
\newblock Generalizing convolutional neural networks for equivariance to lie
  groups on arbitrary continuous data.
\newblock In \emph{International Conference on Machine Learning}, pp.\
  3165--3176. PMLR, 2020.

\bibitem[Gao et~al.(2022)Gao, Lin, and Zhu]{gao2022deformation}
Liyao Gao, Guang Lin, and Wei Zhu.
\newblock Deformation robust roto-scale-translation equivariant {CNN}s.
\newblock \emph{Transactions on Machine Learning Research}, 2022.
\newblock URL \url{https://openreview.net/forum?id=yVkpxs77cD}.

\bibitem[Glorot \& Bengio(2010)Glorot and Bengio]{glorot2010understanding}
Xavier Glorot and Yoshua Bengio.
\newblock Understanding the difficulty of training deep feedforward neural
  networks.
\newblock In \emph{Proceedings of the thirteenth international conference on
  artificial intelligence and statistics}, pp.\  249--256. JMLR Workshop and
  Conference Proceedings, 2010.

\bibitem[He et~al.(2016)He, Zhang, Ren, and Sun]{he2016deep}
Kaiming He, Xiangyu Zhang, Shaoqing Ren, and Jian Sun.
\newblock Deep residual learning for image recognition.
\newblock In \emph{Proceedings of the IEEE conference on computer vision and
  pattern recognition}, pp.\  770--778, 2016.

\bibitem[Jenner \& Weiler(2022)Jenner and Weiler]{jenner2021steerable}
Erik Jenner and Maurice Weiler.
\newblock Steerable partial differential operators for equivariant neural
  networks.
\newblock In \emph{ICLR}, 2022.

\bibitem[Kanazawa et~al.(2014)Kanazawa, Sharma, and
  Jacobs]{kanazawa2014locally}
Angjoo Kanazawa, Abhishek Sharma, and David Jacobs.
\newblock Locally scale-invariant convolutional neural networks.
\newblock \emph{arXiv preprint arXiv:1412.5104}, 2014.

\bibitem[Kingma \& Ba(2015)Kingma and Ba]{kingma2015adam}
Diederik~P Kingma and Jimmy Ba.
\newblock Adam: A method for stochastic optimization.
\newblock In \emph{ICLR (Poster)}, 2015.

\bibitem[Krizhevsky et~al.(2012)Krizhevsky, Sutskever, and
  Hinton]{krizhevsky2012imagenet}
Alex Krizhevsky, Ilya Sutskever, and Geoffrey~E Hinton.
\newblock Imagenet classification with deep convolutional neural networks.
\newblock \emph{Advances in neural information processing systems},
  25:\penalty0 1097--1105, 2012.

\bibitem[Larochelle et~al.(2007)Larochelle, Erhan, Courville, Bergstra, and
  Bengio]{larochelle2007empirical}
Hugo Larochelle, Dumitru Erhan, Aaron Courville, James Bergstra, and Yoshua
  Bengio.
\newblock An empirical evaluation of deep architectures on problems with many
  factors of variation.
\newblock In \emph{Proceedings of the 24th international conference on Machine
  learning}, pp.\  473--480, 2007.

\bibitem[Li et~al.(2019)Li, Chen, Wang, and Zhang]{li2019scale}
Yanghao Li, Yuntao Chen, Naiyan Wang, and Zhaoxiang Zhang.
\newblock Scale-aware trident networks for object detection.
\newblock In \emph{Proceedings of the IEEE/CVF International Conference on
  Computer Vision}, pp.\  6054--6063, 2019.

\bibitem[Lindeberg(2013)]{lindeberg2013scale}
Tony Lindeberg.
\newblock \emph{Scale-space theory in computer vision}, volume 256.
\newblock Springer Science \& Business Media, 2013.

\bibitem[MacDonald et~al.(2022)MacDonald, Ramasinghe, and
  Lucey]{macdonald2022enabling}
Lachlan~E MacDonald, Sameera Ramasinghe, and Simon Lucey.
\newblock Enabling equivariance for arbitrary lie groups.
\newblock In \emph{Proceedings of the IEEE/CVF Conference on Computer Vision
  and Pattern Recognition}, pp.\  8183--8192, 2022.

\bibitem[Romero et~al.(2020)Romero, Bekkers, Tomczak, and
  Hoogendoorn]{romero2020attentive}
David Romero, Erik Bekkers, Jakub Tomczak, and Mark Hoogendoorn.
\newblock Attentive group equivariant convolutional networks.
\newblock In \emph{International Conference on Machine Learning}, pp.\
  8188--8199. PMLR, 2020.

\bibitem[Shen et~al.(2020)Shen, He, Lin, and Ma]{shen2020pdo}
Zhengyang Shen, Lingshen He, Zhouchen Lin, and Jinwen Ma.
\newblock Pdo-econvs: Partial differential operator based equivariant
  convolutions.
\newblock In \emph{International Conference on Machine Learning}, pp.\
  8697--8706. PMLR, 2020.

\bibitem[Sosnovik et~al.(2020)Sosnovik, Szmaja, and
  Smeulders]{sosnovik2019scale}
Ivan Sosnovik, Michał Szmaja, and Arnold Smeulders.
\newblock Scale-equivariant steerable networks.
\newblock In \emph{International Conference on Learning Representations}, 2020.

\bibitem[Sosnovik et~al.(2021{\natexlab{a}})Sosnovik, Moskalev, and
  Smeulders]{sosnovik2021disco}
Ivan Sosnovik, Artem Moskalev, and Arnold Smeulders.
\newblock Disco: accurate discrete scale convolutions.
\newblock \emph{arXiv preprint arXiv:2106.02733}, 2021{\natexlab{a}}.

\bibitem[Sosnovik et~al.(2021{\natexlab{b}})Sosnovik, Moskalev, and
  Smeulders]{sosnovik2021transform}
Ivan Sosnovik, Artem Moskalev, and Arnold Smeulders.
\newblock How to transform kernels for scale-convolutions.
\newblock In \emph{Proceedings of the IEEE/CVF International Conference on
  Computer Vision}, pp.\  1092--1097, 2021{\natexlab{b}}.

\bibitem[Sosnovik et~al.(2021{\natexlab{c}})Sosnovik, Moskalev, and
  Smeulders]{sosnovik2021scale}
Ivan Sosnovik, Artem Moskalev, and Arnold~WM Smeulders.
\newblock Scale equivariance improves siamese tracking.
\newblock In \emph{Proceedings of the IEEE/CVF Winter Conference on
  Applications of Computer Vision}, pp.\  2765--2774, 2021{\natexlab{c}}.

\bibitem[Weiler et~al.(2018)Weiler, Hamprecht, and Storath]{weiler2018learning}
Maurice Weiler, Fred~A Hamprecht, and Martin Storath.
\newblock Learning steerable filters for rotation equivariant cnns.
\newblock In \emph{Proceedings of the IEEE Conference on Computer Vision and
  Pattern Recognition}, pp.\  849--858, 2018.

\bibitem[Worrall \& Welling(2019)Worrall and Welling]{worrall2019deep}
Daniel~E Worrall and Max Welling.
\newblock Deep scale-spaces: equivariance over scale.
\newblock In \emph{Proceedings of the 33rd International Conference on Neural
  Information Processing Systems}, pp.\  7366--7378, 2019.

\bibitem[Worrall et~al.(2017)Worrall, Garbin, Turmukhambetov, and
  Brostow]{worrall2017harmonic}
Daniel~E Worrall, Stephan~J Garbin, Daniyar Turmukhambetov, and Gabriel~J
  Brostow.
\newblock Harmonic networks: Deep translation and rotation equivariance.
\newblock In \emph{Proceedings of the IEEE Conference on Computer Vision and
  Pattern Recognition}, pp.\  5028--5037, 2017.

\bibitem[Zeiler \& Fergus(2014)Zeiler and Fergus]{zeiler2014visualizing}
Matthew~D Zeiler and Rob Fergus.
\newblock Visualizing and understanding convolutional networks.
\newblock In \emph{European conference on computer vision}, pp.\  818--833.
  Springer, 2014.

\bibitem[Zhao \& Blu(2020)Zhao and Blu]{zhao2020fourier}
Tianle Zhao and Thierry Blu.
\newblock The fourier-argand representation: An optimal basis of steerable
  patterns.
\newblock \emph{IEEE Transactions on Image Processing}, 29:\penalty0
  6357--6371, 2020.

\bibitem[Zhou et~al.(2017)Zhou, Ye, Qiu, and Jiao]{zhou2017oriented}
Yanzhao Zhou, Qixiang Ye, Qiang Qiu, and Jianbin Jiao.
\newblock Oriented response networks.
\newblock In \emph{Proceedings of the IEEE Conference on Computer Vision and
  Pattern Recognition}, pp.\  519--528, 2017.

\bibitem[Zhu et~al.(2019)Zhu, Qiu, Calderbank, Sapiro, and
  Cheng]{zhu2019scaling}
Wei Zhu, Qiang Qiu, Robert Calderbank, Guillermo Sapiro, and Xiuyuan Cheng.
\newblock Scaling-translation-equivariant networks with decomposed
  convolutional filters.
\newblock \emph{arXiv preprint arXiv:1909.11193}, 2019.

\end{thebibliography}
\bibliographystyle{iclr2023_conference}


\newpage
\appendix
\onecolumn

\section{Proof of Scalable Fourier Argand Representation}
\label{app:sfa}
\begin{definition}[Scalable Fourier-Argand representation]
Let $h$ be a square-integrable function expressed as $h(r, \theta)$ in polar coordinates.
Its scalable Fourier-Argand representation is defined as the following series form:
\begin{equation}\label{pfeq:eq1}
	\begin{aligned}
		h(r, \theta) & = \sum_{k_1, k_2 \in \mathbb{Z}}   \left( h_{k_1, k_2}  r^{m_{k_1}} \exp \left(  i (k_1 \theta +   k_2 \frac{2 \pi \ln r}{\ln b / a} ) \right) \right)  = \sum_{k_1,k_2 \in \mathbb{Z}} H_{k_1, k_2}(r, \theta)
	\end{aligned}
\end{equation}
where $\mathbb{Z}$ denotes the set of all integers, and $m_k$ can be chosen as a constant value.
The function has limited support, i.e., $(r, \theta) \in [0, 2 \pi[ \times [a, b[$, and $k_1, k_2$ can be truncated for a subset of integers to approximate $h$. Let $h_{k_1, k_2}: \mathbb{Z}^{2} \rightarrow \mathbb{C}$ as the coefficient of each item. We can compute it as follows,
\begin{equation}
\begin{aligned}
	 h_{k_1, k_2} =& \frac{1}{\ln \frac{b}{a}} \int_{\ln a}^{\ln b}  \frac{1}{2 \pi} \int_0^{2 \pi}  
	 h(e^\rho, \theta)  \exp \left(  -i k_1 \theta -i  k_2 \frac{  2 \pi  \rho}{\ln b / a} - \rho m \right) d \theta  d \rho
\end{aligned}			
\end{equation}
\end{definition}

\textit{Proof.}

We begin by substituting the variable $r$ in \cref{pfeq:eq1} with an exponential term, $r=e^\rho$, where $\rho=\ln r$. 
\begin{equation}\label{eq3}
	\begin{aligned}
		h(e^\rho, \theta) & = \sum_{k_1, k_2 \in \mathbb{Z}}   \left[ h_{k_1, k_2}  \exp(\rho m_{k_1}) \exp (i (k_1 \theta +   k_2 \frac{2 \pi \rho}{\ln b - \ln a} ))
		 \right] 
	\end{aligned}
\end{equation}
Additionally, we introduce the quantity $g_{k_1, k_2}$ and define it accordingly,
\begin{equation}\label{gkk}
\begin{aligned}
	 g_{k_1, k_2} \equiv & \frac{1}{\ln b / a} \int_{\ln a}^{\ln b}  \frac{1}{2 \pi} \int_0^{2 \pi}  
	 h(e^\rho, \theta)  \exp(  -i k_1 \theta -i  k_2 \frac{  2 \pi  \rho}{\ln b / a} - \rho m ) d \theta  d \rho
\end{aligned}			
\end{equation}
We then substitute $h(e^\rho, \theta)$ in \cref{gkk} with the scalable Fourier-Argand representation in \cref{eq3}.
\begin{equation}\label{prf1}
\begin{aligned}
	 g_{k_1, k_2} &= \frac{1}{\ln b / a} \int_{\ln a}^{\ln b}  \frac{1}{2 \pi} \int_0^{2 \pi}  
	 h(e^\rho, \theta)  \exp(  -i k_1 \theta -i  k_2 \frac{  2 \pi  \rho}{\ln b / a} - \rho m ) d \theta  d \rho \\
     &= \frac{1}{\ln b / a} \int_{\ln a}^{\ln b}  \frac{1}{2 \pi} \int_0^{2 \pi}  
	 \left(
	  \sum_{t_1, t_2 \in \mathbb{Z}}    h_{t_1, t_2}  
	 \exp(\rho m_{t_1}) \exp (i (t_1 \theta +   t_2 \frac{2 \pi \rho}{\ln b / a} ))
	 \right)  \\
	 & \hspace{100pt} \exp(  -i k_1 \theta -i  k_2 \frac{  2 \pi  \rho}{\ln b / a} - \rho m ) d \theta  d \rho \\
     &= \frac{1}{\ln b / a} \int_{\ln a}^{\ln b}  \frac{1}{2 \pi} \int_0^{2 \pi}  
	 \sum_{t_2 \in \mathbb{Z}} 
	 \left(
	  \sum_{t_1 \in \mathbb{Z}}    h_{t_1, t_2}  
	  \exp(\rho m_{t_1}) \exp (i t_1 \theta  )
	  \right)
	   \exp(i  t_2 \frac{2 \pi \rho}{\ln b/ a} )   \\
	 & \hspace{100pt}   \exp(  -i k_1 \theta ) d \theta \exp(-i  k_2 \frac{  2 \pi  \rho}{\ln b / a} - \rho m ) d \rho \\
	 &= \frac{1}{\ln b / a} \int_{\ln a}^{\ln b}    
	 \sum_{t_2 \in \mathbb{Z}} 
	 \left(
	  \sum_{t_1 \in \mathbb{Z}}    h_{t_1, t_2}  \frac{1}{2 \pi} \int_0^{2 \pi}
	  \exp(\rho m_{t_2}) \exp (i (t_1-k_1) \theta  ) d \theta
	  \right)
	   \exp(i  t_2 \frac{2 \pi \rho}{\ln b/ a} )   \\
	 & \hspace{100pt}    \exp(-i  k_2 \frac{  2 \pi  \rho}{\ln b / a} - \rho m ) d \rho \\
	 &= \frac{1}{\ln b / a} \int_{\ln a}^{\ln b}    
	 \sum_{t_2 \in \mathbb{Z}}  \sum_{t_1 \in \mathbb{Z}}    h_{t_1, t_2}  
	 \left( 
	 \frac{1}{2 \pi} \int_0^{2 \pi}
	  \exp (i (t_1-k_1) \theta  ) d \theta
	  \right) \\
&	\hspace{100pt}   \exp(i  t_2 \frac{2 \pi \rho}{\ln b/ a} )   \exp(-i  k_2 \frac{  2 \pi  \rho}{\ln b / a}  ) d \rho  \\
	\end{aligned}			
\end{equation}
\begin{equation}
\begin{aligned}
  &= \frac{1}{\ln b / a} \int_{\ln a}^{\ln b}    
	 \sum_{t_2 \in \mathbb{Z}}  \sum_{t_1 \in \mathbb{Z}}    h_{t_1, t_2}  
	  \delta[t_1 - k_1] 
	 \exp(i  t_2 \frac{2 \pi \rho}{\ln b/ a} )      \exp(-i  k_2 \frac{  2 \pi  \rho}{\ln b / a}  ) d \rho \\
	  &= \frac{1}{\ln b / a} \int_{\ln a}^{\ln b}    
	 \sum_{t_2 \in \mathbb{Z}}    h_{k_1, t_2}  
	  \exp(i  (t_2-k_2) \frac{2 \pi \rho}{\ln b/ a} )      d \rho \\
	  &=     	 \sum_{t_2 \in \mathbb{Z}}    h_{k_1, t_2}  \frac{1}{\ln b / a} \int_{\ln a}^{\ln b}
	  \exp(i  (t_2-k_2) \frac{2 \pi \rho}{\ln b/ a} )      d \rho \\
	  &=     	 \sum_{t_2 \in \mathbb{Z}}    h_{k_1, t_2}  \delta[t_2 - k_2] \\
     &= h_{k_1, k_2}
\end{aligned}
\end{equation}
The validity of the last equation is demonstrated through two cases: first, when $t_2 = k_2$, 
\begin{equation}
\frac{1}{\ln b/a} \int_{\ln a}^{\ln b}	  \exp(i  (t_2-k_2) \frac{2 \pi \rho}{\ln b/ a} )      d \rho
=    \frac{1}{\ln b/a} \int_{\ln a}^{\ln b} \exp(i 0) dr = 1
\end{equation}
and second, when $t_2 \neq k_2$.
\begin{equation}
\begin{aligned}
 \frac{1}{\ln b/a} \int_{\ln a}^{\ln b}	  \exp(i  (t_2-k_2) \frac{2 \pi \rho}{\ln b/ a} )      d \rho 
= \frac{1}{\ln b/a} \int_{\ln a}^{\ln b}	  \exp(i  n \frac{2 \pi \rho}{\ln b/ a} )      d \rho =0
\end{aligned}\end{equation}
Finally, the assertion in \cref{prf1} is proven by the aforementioned arguments.

\section{Proof of Distortion Property}
\label{app:dp}
\begin{lemma}
Let $\mathbf{T}=\mathbf{A}_s \mathbf{Y}_t \mathbf{R}_{\alpha}$ as a similarity transformation. 
Then for a input image $f$ and its distorted version $L_{\bf T}[f]({\bf x})$,
for any position $x$, we can have a relationship of this pair of images by following property,
\begin{equation}\label{eq:pres}
\begin{aligned}
	\Lambda_{L_{\bf T} [f]}({\bf  x}) &= \Lambda_{f}({\bf T}^{-1}{\bf x}) \cdot s \\
	\Gamma_{L_{\bf T} [f]}({\bf  x}) &= \Gamma_{f}({\bf T}^{-1}{\bf x}) + \alpha \\
\end{aligned}
\end{equation}
\end{lemma}
\begin{proof}
Consider the original and distorted image $f(x)$ and $L_T[f](x)$, respectively, 
and use the scalable Fourier Argand basis. We obtain:
\begin{equation}
\begin{aligned}
\left[\Lambda_{L_{\bf {\bf T}} [f]} ({\bf x}), \Gamma_{L_{\bf T} [f]} ({\bf x}) \right] 
&= \arg\max_{(\lambda, \gamma)} \sum_{k_1, k_2=-K}^{K} L_{\bf T}[f]_{k_1, k_2}({\bf x}) \cdot c_{k_1, k_2} (\lambda, \gamma) \\
&=  \arg\max_{(\lambda, \gamma)} \sum_{k_1, k_2=-K}^{K} [L_{\bf T}[f] * H_{k_1, k_2}]({\bf x}) \cdot \exp (-i k_1 \gamma -i  k_2 \frac{ 2 \pi \ln \lambda}{\ln b / a} ) \lambda^{-m}  \\
\end{aligned}
\end{equation}
Substituting ${\bf x}$ with ${\bf T}^{-1}{\bf x}$, we have:
\begin{equation}
\begin{aligned}
& \left[\Lambda_{L_{\bf T} [f]} ({\bf T}^{-1} {\bf x}), \Gamma_{L_{\bf T} [f]} ({\bf T}^{-1} {\bf x}) \right] \\
=&  \arg\max_{(\lambda, \gamma)} \sum_{k_1, k_2=-K}^{K} [f * L_{T^{-1}}[H_{k_1, k_2}] ]({\bf T}^{-1} {\bf x}) \cdot \exp (-i k_1 \gamma -i  k_2 \frac{ 2 \pi \ln \lambda}{\ln b / a} ) \lambda^{-m}  \\
\end{aligned}
\end{equation}
Besides, from \cref{prop}, we have:
\begin{equation}
	 L_{\bf T}[H_{k_1,k_2}](r, \theta)
    =  H_{k_1, k_2}(r,\theta) \cdot \left( \exp(-i k_1 \alpha - i k_2 \frac{2 \pi \ln s}{\ln b / a} ) s^{-m_{k_1}} \right) \\
\end{equation}
Hence, we can rewrite the previous equation as:
\begin{equation}
\begin{aligned}
& \left[\Lambda_{L_{\bf T} [f]} ({\bf T}^{-1} {\bf x}), \Gamma_{L_{\bf T} [f]} ({\bf T}^{-1} {\bf x}) \right] \\
=&  \arg\max_{(\lambda, \gamma)} \sum_{k_1, k_2=-K}^{K} [f * L_{{\bf T}^{-1}}[H_{k_1, k_2}] ]({\bf T}^{-1} {\bf x}) \cdot \exp (-i k_1 \gamma -i  k_2 \frac{ 2 \pi \ln \lambda}{\ln b / a} ) \lambda^{-m}  \\
=&  \arg\max_{(\lambda, \gamma)} \sum_{k_1, k_2=-K}^{K} [f * H_{k_1, k_2}\cdot \left( \exp(i k_1 \alpha + i k_2 \frac{2 \pi \ln s}{\ln b / a} ) s^{ m_{k_1}} \right)  ] ({\bf T}^{-1} {\bf x})\\
	 & \hspace{50pt} \cdot \exp (-i k_1 \gamma -i  k_2 \frac{ 2 \pi \ln \lambda}{\ln b / a} ) \lambda^{-m}  \\
=&  \arg\max_{(\lambda, \gamma)} \sum_{k_1, k_2=-K}^{K} [f * H_{k_1, k_2}  ]({\bf T}^{-1} {\bf x}) \cdot \exp (-i k_1 (\gamma-\alpha) - i  k_2 \frac{ 2 \pi \ln (\lambda/s)}{\ln b / a} ) (\lambda/s)^{-m}  \\
=&  \arg\max_{(\lambda s, \gamma+\alpha)} \sum_{k_1, k_2=-K}^{K} [f * H_{k_1, k_2}  ]({\bf T}^{-1} {\bf x}) \cdot \exp (-i k_1 (\gamma) - i  k_2 \frac{ 2 \pi \ln (\lambda)}{\ln b / a} ) (\lambda)^{-m}  \\
\end{aligned}
\end{equation}
Finally, we can conclude that,
\begin{equation}\label{eqap:pres}
\begin{aligned}
	\Lambda_{L_{\bf T} [f]}({\bf  x}) &= \Lambda_{f}({\bf T}^{-1} {\bf x}) \cdot s \\
	\Gamma_{L_{\bf T} [f]}({\bf  x}) &= \Gamma_{f}({\bf T}^{-1} {\bf x}) + \alpha \\
\end{aligned}
\end{equation}

\end{proof}

\section{Details, Analysis, and Visualization}
\subsection{Additional Studies}\label{sec:ana}
\paragraph{Stability}
We first quantify the deformation stability, and seek to answer in what degree of distortion can the method persist the equivariant (or the generalization ability) compared with original convolution.
To achieve this, we evalue our method on the test dataset that rotated and scaled entirely by a certain amount. 
The MNIST dataset is used in this experiment. 
The results of this experiment are presented in \cref{fig:equi}, 
which shows the accuracy decay with different settings.
From the figure, we observe that when the image is rotated significantly beyond what was seen during training, CNNs experience a large drop in accuracy, whereas our method maintains consistently excellent performance. 
For scale changing tests, our method retains relatively good capability over a wide range of scale changes. However, for extremely large scale changes, our method experiences performance drop due to the limited filter spot and sampling approximation.
\begin{figure}[!htb]
\centering
\includegraphics[width=0.4\linewidth]{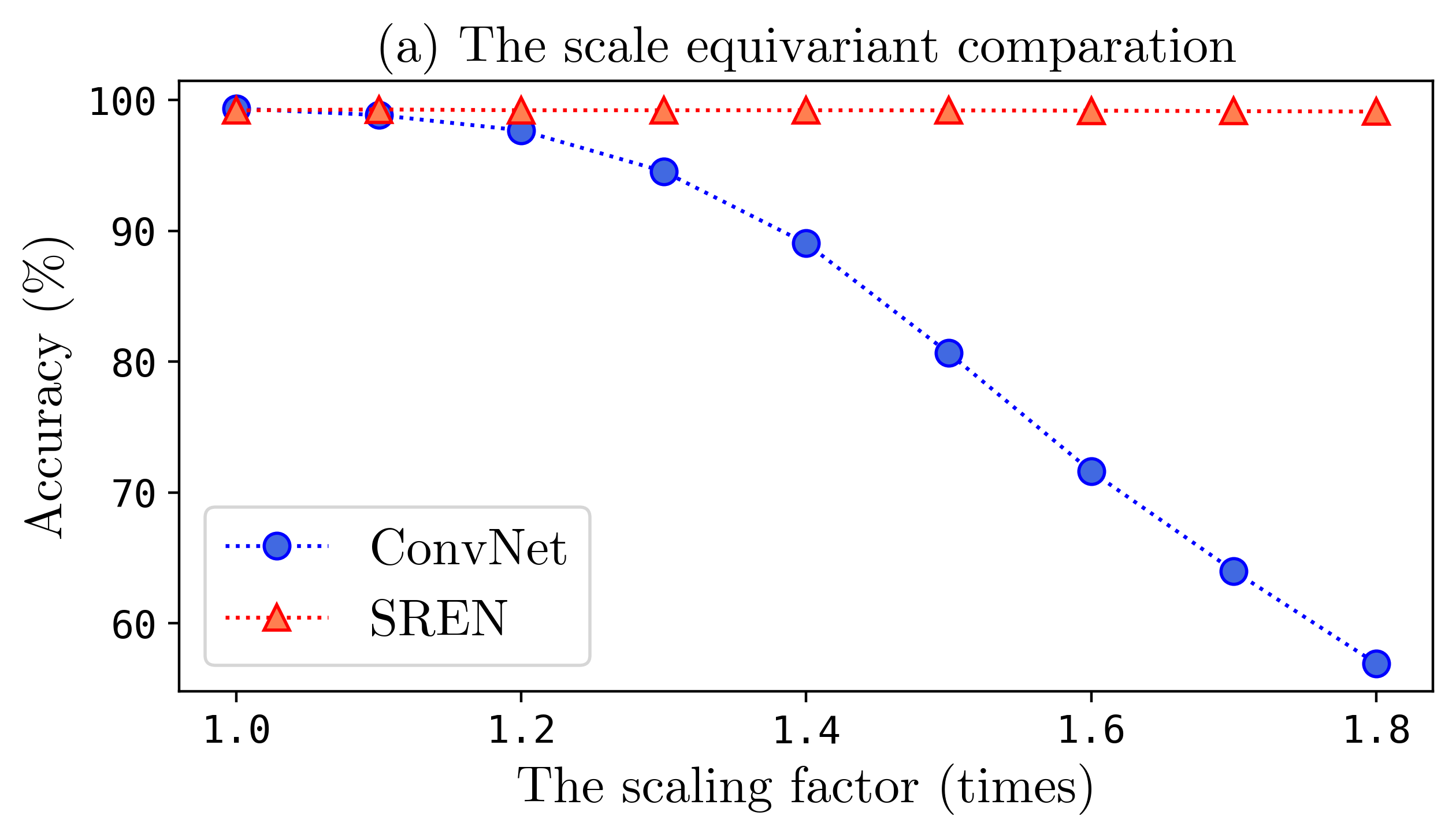} 
\hspace{18pt}
\includegraphics[width=0.4\linewidth]{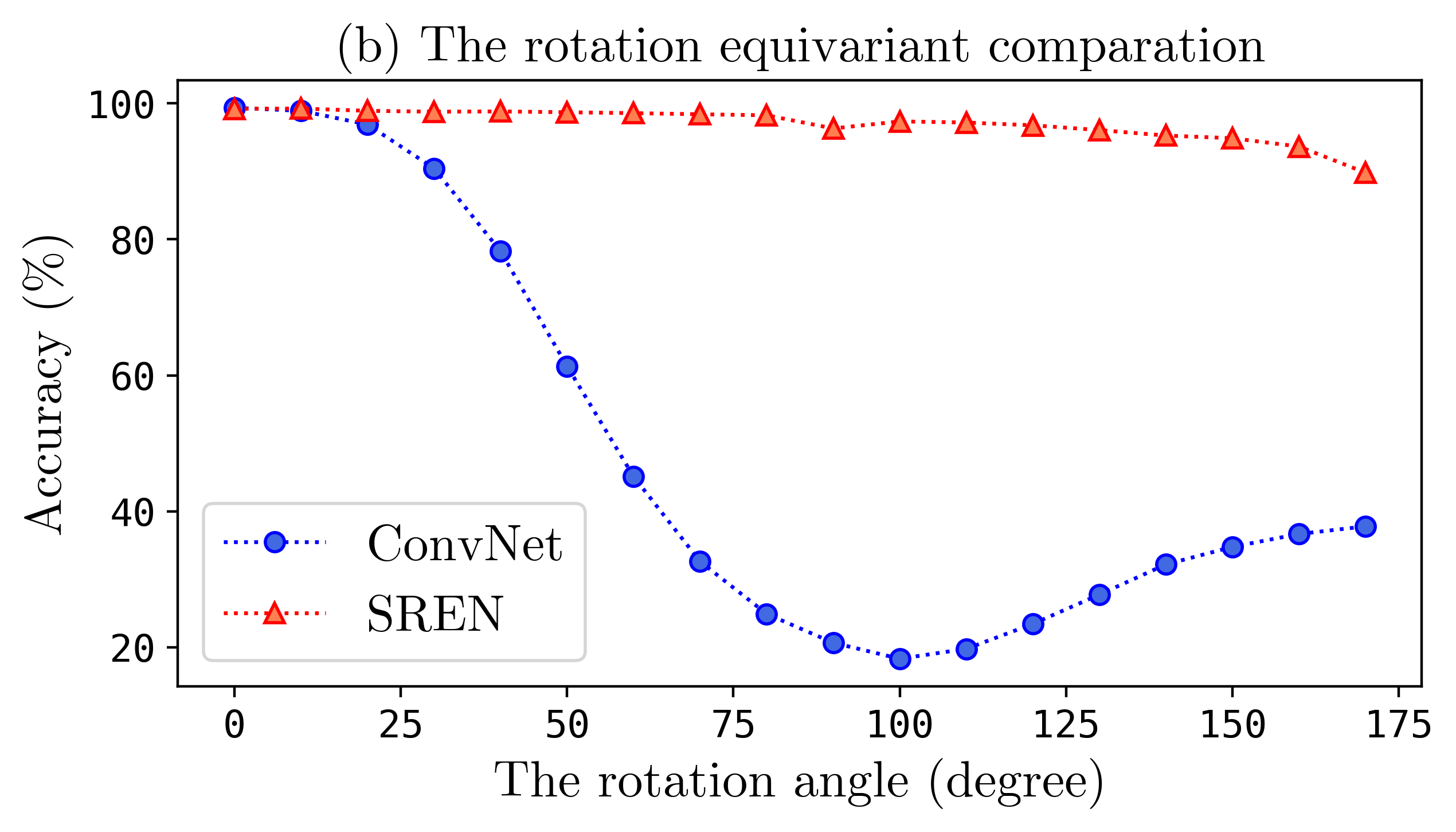} 
	\caption{Equivariance Stability: Our network demonstrates high equivariance stability when the entire test set undergoes translation and rotation by a certain scale or angle. In contrast, ConvNet's performance drops significantly under such transformations.}
	\label{fig:equi}
\end{figure}

\subsection{Feature visualization}
In addition to numerical experiments, we employ feature visualization to intuitively verify the achieved equivariance of our network. The visualization results are presented in \cref{fig:visual}. This approach examines whether the hidden features of the network exhibit visual equivariance with respect to rotation and scaling. The parameters are randomly initialized for both the CNN and SREN models. Through a compensation view, we can clearly observe that our network successfully achieves equivariance in both rotation and scaling.
\begin{figure}
\centering
\includegraphics[width=\linewidth]{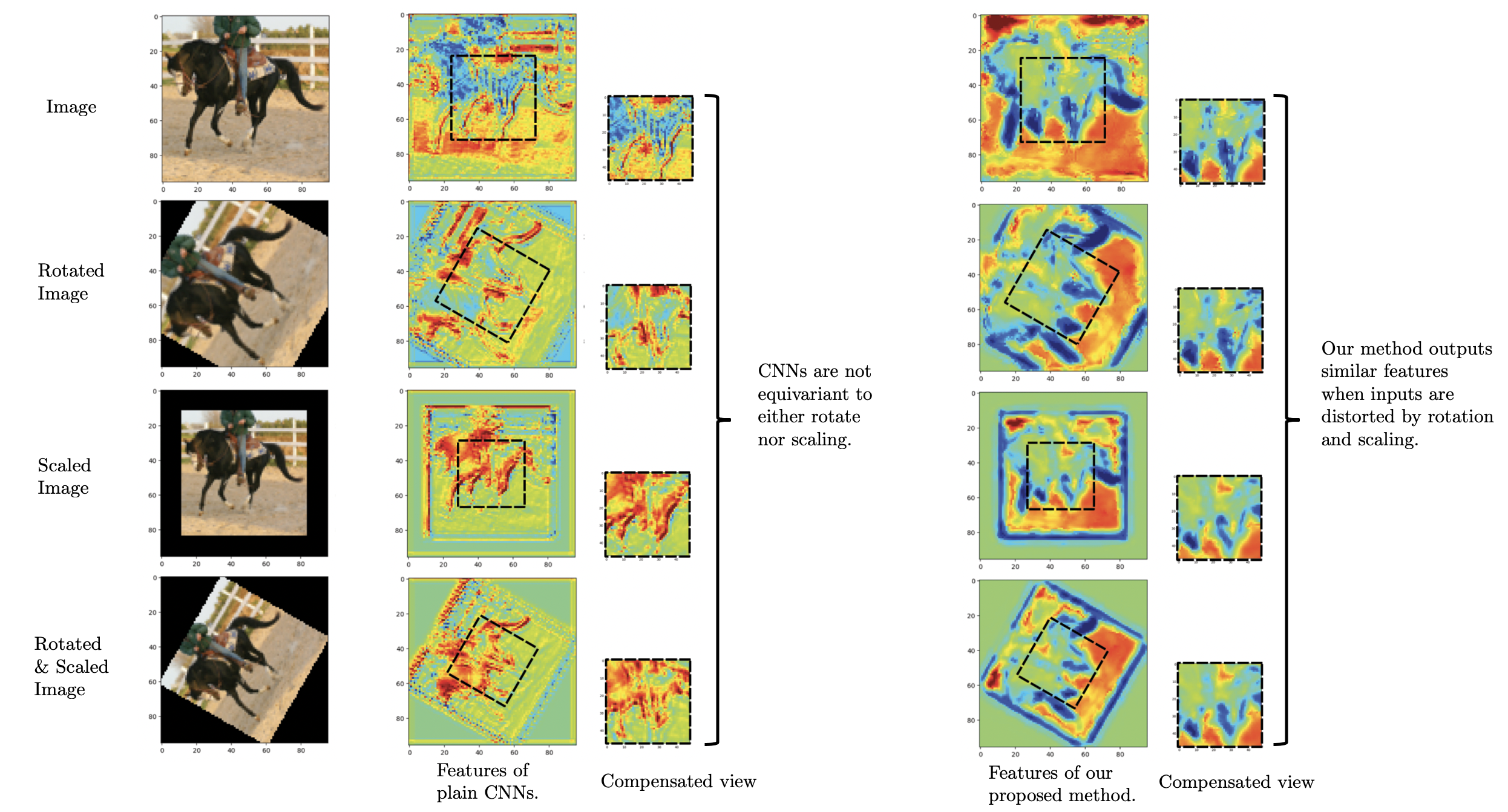}
	\vspace{-8pt}
  \caption{ Feature Visualization: We perform a feature visualization of both Convolutional Neural Networks (CNNs) and our Scale-equivariant Residual Equilibrium Network (SREN) using a compensating view. The visualization allows us to observe how the features vary with different transformations of the object. Our results indicate that the features extracted by our method remain stable under transformations, whereas those of CNNs vary.}
  \label{fig:visual}
\end{figure}

\subsection{Architecture and pipeline}
The algorithmic framework is illustrated in \cref{fig:alg}. 
Our method can be extended to a multi-layer network structure.
Assuming that the network comprises $K$ blocks (which we named as SimBlock), each of which contains several convolutional layers. 
Let $f_i$ denote the input signal of the $i$-th block. 
We first compute the scalable Fourier Argand feature, which is introduced in \cref{sec:sfa}, 
to obtain a spatial-wise matrix $\mathbf{M}_{f_i}(\mathbf{x})$. 
Then, all the SimConv layers, proposed in \cref{sec:simconv}, within a SimBlock share the same scalable Fourier Argand features. Assuming there are $N$ SimConv layers in each SimBlock, according to the property of \cref{eq:multi}, the output feature $f_{i+1}$ of the block is guaranteed to be an equivariant feature corresponding to the input image. 
In the classification task, the invariance property is desired. Therefore, we perform pooling for each channel in the last layer, followed by the MLP layer. Finally, the output is obtained and guaranteed to be invariant for any similar transform. The whole process is presented in \cref{alg:pipline}.

\begin{figure}
\centering	
\vspace{-10pt}
\begin{minipage}{0.55\textwidth}
\begin{algorithm}[H]
\caption{Pipeline of our SREN Algorithm} \label{alg:pipline}
\vspace*{0.12 cm}
\KwIn{Batch of data  $\{ f_i \}_{i=0}^{T}$}
\KwOut{predicted output value $v$}
\For{$i = 1:K$ levels}
{
Initialize filter $h$, calculate its basis $H_{k_1, k_2}$\;
Take $f_i$ as input\;
Get $f_{k_1, k_2} \leftarrow f  \star H_{k_1, k_2}$\;
Get $\Gamma_{f_i}, \lambda_{f_i} \leftarrow \arg\max_{\gamma, \lambda} \mathbf{c}_{\gamma, \lambda} \mathbf{F}  $\;
Calculate $M_f \leftarrow \Gamma_{f_i}, \lambda_{f_i} $\;
	\For{$m = 1:M$ layers in specific level n}
	{
	Apply $M_f$ to SimConv\;
	$f_{i,m+1} \leftarrow f_i \circledast \varphi_{i,m}$\;
	}
	$f_{i+1} \leftarrow f_{i,M}$
}
Get the output with a head layer: $v \leftarrow \text{MLP}(f)$
\end{algorithm}
\end{minipage}
\end{figure}

\begin{table}
\caption{Methods comparation on STL-10 dataset, all methods use the WideResNet as the backbone.}
\centering
\footnotesize
\begin{tabular}{l |c c c c c c c c c}
\toprule
Method & WRN & SiCNN & SI-ConvNet & DSS & SS-CNN & SESN & DISCO & SREN \\
\hline
Error & 11.48 & 11.62 & 12.48 & 11.28 & 25.47 & 8.51 & 8.07 & 8.23 \\
\bottomrule
\end{tabular}
\label{tab:wrn}
\end{table}

\subsection{STL-10 dataset comparation}
This section presents a comparison of our method with other related approaches using the same backbone, namely WideResNet with 16 layers and a widen factor of 8. Our network was trained for 1000 epochs using SGD as the optimizer, with a batch size of 128. The initial learning rate was set to 0.1 and decreased by a factor of 0.2 after 300 epochs, while the drop rate probability was set to 0.3. We also utilized data augmentation techniques, as described in \cite{sosnovik2021disco}. 
Table \ref{tab:wrn} presents our main results compared to recent baseline methods, such as SiCNN \citep{kanazawa2014locally}, DSS \citep{worrall2019deep}, SESN \citep{sosnovik2019scale}, and DISCO \citep{sosnovik2021disco}. Our method achieved competitive results that are close to state-of-the-art performance. Additionally, we would like to emphasize that our method's computational cost and parameter count are similar to those of the classic convolutional neural network with the same backbone. This is because the two quantities $\Lambda$ and $\Gamma$ can be shared across different filters, resulting in a small cost compared to the abundance of the convolutional (or our SimConv) layer. 
Moreover, the complexity of our SimConv operator is similar to that of the convolutional layer. For comparison, as reported in \citet{sosnovik2021disco}, the computational cost of the DISCO method takes more than five times longer and SESN takes more than 16.5 times longer than classic CNNs.

\end{document}